\theoremstyle{plain}
\newtheorem{theorem}{\bf{Theorem}} 
\newtheorem{lemma}{\bf{Lemma}} 
\newtheorem{assumption}{\bf{Assumption}}
\theoremstyle{remark}
\newtheorem{remark}{\bf{Remark}}
\newcommand{\indext}{\mathcal{I}_{t}}
\title{DReS-FL: Dropout-Resilient Secure Federated Learning for Non-IID Clients via Secret Data Sharing}
\author{%
  Jiawei Shao$^{\ddag}$, Yuchang Sun$^{\ddag}$, Songze Li$^{\ddag \S}$, Jun Zhang$^{\ddag}$ \\
  $^{\ddag}$Hong Kong University of Science and Technology \\
  $^{\S}$Hong Kong University of Science and Technology (Guangzhou) \\
  \texttt{\{jiawei.shao,yuchang.sun\}@connect.ust.hk, \{songzeli,eejzhang\}@ust.hk } \\
}
\begin{document}

\maketitle

\begin{abstract}

Federated learning (FL) strives to enable privacy-preserving training of machine learning models without centrally collecting clients' private data. 
Despite its advantages, the local datasets across clients in FL are non-independent and identically distributed (non-IID), and the data-owning clients may drop out of the training process arbitrarily. 
These characteristics will significantly degrade the training performance. 
Therefore, we propose a \emph{Dropout-Resilient Secure Federated Learning} (DReS-FL) framework based on Lagrange coded computing (LCC) to tackle both the non-IID and dropout problems.
The key idea is to utilize Lagrange coding to secretly share the private datasets among clients so that each client receives an encoded version of the \emph{global dataset}\footnote{In the context of this paper, we use the global dataset to denote the concatenation of the clients' datasets.}, and the local gradient computation over this dataset is unbiased.
To correctly decode the gradient at the server, the gradient function has to be a polynomial in a finite field, and thus we construct \emph{polynomial integer neural networks} (PINNs) to enable our framework. 
Theoretical analysis shows that DReS-FL is resilient to client dropouts and provides strong privacy guarantees.
Furthermore, we experimentally demonstrate that DReS-FL consistently leads to significant performance gains over baseline methods.


\end{abstract}

\section{Introduction}


Federated learning (FL) \cite{fl} is a machine learning framework in which a central server coordinates a large number of clients to collaboratively train a shared model.
The key idea of FL is to train the model locally by individual clients and aggregate updates globally by the server.
The main target is to provide privacy protection for clients' local samples and solve the ``data islands'' problem.
However, as local data are typically non-independent and identically distributed (non-IID), the model divergence during the local update may lead to unstable and slow convergence \cite{scaffold,advances,noniidquagmire}.
With many clients involved in the training, some of the clients could drop out of the training process unexpectedly (due to poor connectivity, battery level, etc), and it will cause detrimental model performance \cite{No_fear_of_heterogeneity}.
Thus, effective mechanisms are needed to tackle the non-IID data distribution and client dropouts, while preserving the privacy of local datasets, which motivates this work.

To alleviate the non-IID problem, existing methods typically follow \emph{algorithm-based approaches} \cite{scaffold,fedprox,non_iid_fl_1,non_iid_fl_2,non_iid_fl_3} and add regularization terms to mitigate the model divergence.
However, these methods are not dropout-resilient evidenced by the empirical results in \cite{noniidsilo}.
This can be explained by the greatly varying data distributions among different rounds.
Another fold of strategy for dealing with the non-IID problem is \emph{data-centric approach} \cite{raw_data_share_1,raw_data_share_2,fedmix,scfl,data_synthesis_1,data_synthesis_2,gan_1,gan_2,fl_server_gan,feature_gan,zijian_iclr}, which generates extra training samples to construct a more balanced data distribution for each client.
The common practices are to share the synthesized samples \cite{fedmix,scfl,data_synthesis_1,data_synthesis_2} or GAN-based augmented data \cite{gan_1,gan_2,fl_server_gan,feature_gan,zijian_iclr}.
However, these methods may leak private information about local datasets and violate the privacy criterion in FL.

In this work, we develop a \emph{\textbf{D}ropout-\textbf{Re}silient \textbf{S}ecure \textbf{F}ederated \textbf{L}earning} (DReS-FL) framework to address the above problems via Lagrange coded computing (LCC) \cite{LCC}.
The key idea of LCC is to encode the datasets using Lagrange polynomials that create computational redundancy across the workers in a privacy-preserving way to tolerate client dropouts.
Before the training starts, the clients secretly share their encoded datasets with each other.
This allows clients to access an encoded version of the global dataset that solves the data heterogeneous problem. 
In each communication round of federated training, the clients perform local gradient computations on the mini-batches sampled from the encoded datasets.
After collecting the uploaded computation results from surviving clients, the server performs polynomial interpolation to decode the \emph{global gradient}\footnote{The global gradient corresponds to the stochastic gradient computed from mini-batches that are uniformly sampled from the global dataset.
For simplicity, we consider that clients only perform one local stochastic gradient descent (SGD) step in each communication round.
Note that the proposed DReS-FL framework, as discussed in Appendix \ref{appendix:multi-round_SGD}, can be extended to more general cases in which clients can run multiple local SGD steps.} for model training.
Therefore, the training process in DReS-FL is made equivalent to centralized training and eliminates the non-IID and dropout problems.
With respect to privacy protection, the proposed framework has two salient features:
\begin{itemize}
\item It guarantees the privacy of local datasets during data sharing, i.e., no private information can be inferred from the encoded data even if a certain number of clients collude.
\item It achieves the same privacy guarantee as the \emph{secure aggregation} protocols, i.e., the server learns no information about the private dataset from a single client's computation result.
\end{itemize}




Note that to correctly decode the gradient at the server, the gradient has to be a polynomial function in a finite field, which is a main design challenge of DreS-FL.
To sum up, our main contributions are summarized as follows:

\begin{itemize}
    \item The proposed DReS-FL framework provides a unified approach to tackle two critical problems of FL, namely, \emph{non-IID data distribution} and \emph{client dropouts}. Meanwhile, it maintains privacy and security guarantees such that no information about local datasets can be leaked beyond the global model parameters.
    \item We construct \emph{polynomial integer neural networks} (PINNs) to ensure that the gradient is a polynomial, so that cryptographic primitives can be applied for secure computation. A PINN consists of affine transformation layers with parameters constrained in an integer set, and it adopts the quadratic function as the activation function.
    The convergence analysis of DReS-FL with PINNs is also provided.
    \item We conduct extensive experiments on FL benchmark datasets to demonstrate the effectiveness of DReS-FL. It is shown that DReS-FL outperforms baseline methods under the setting where local datasets are heterogeneous and clients may drop out of the training process arbitrarily.
\end{itemize}

\section{Related Works} 

\textbf{Non-IID data and client dropouts.}
Training with heterogeneous data is a unique challenge for FL \cite{fl}, which significantly affects the convergence performance \cite{No_fear_of_heterogeneity}.
The client dropouts exacerbate the non-IID problem as the data distributions among different rounds could vary greatly.
Many algorithm-based methods \cite{scaffold,fedprox,non_iid_fl_1,non_iid_fl_2,non_iid_fl_3} attempt to mitigate the clients' model divergence, but these methods cannot solve the essence of the non-IID problem due to the intrinsic difference between minimizing the local empirical loss and minimizing the global empirical loss.
Another line of work adopts data-centric methods \cite{gan_1,gan_2,fl_server_gan,feature_gan,zijian_iclr} to modify the local distributions.
Ideally, a perfect data sharing mechanism should achieve that the local datasets have the same distribution as the global dataset while maintaining the privacy guarantee.
Common practices include sharing raw datasets \cite{raw_data_share_1,raw_data_share_2}, synthesized samples, \cite{fedmix,scfl,data_synthesis_1,data_synthesis_2} or augmented data \cite{gan_1,gan_2,fl_server_gan,feature_gan,zijian_iclr}.
However, these works cannot fully preserve local data privacy in an information-theoretic sense \cite{sss}.
A special data-centric method is the secret coding scheme, which has been widely utilized in homomorphic encryption (HE) \cite{HE1,HE2,HE3,HE4,HE5,HE6,HE7} and multiparty computation (MPC) techniques \cite{MPC1,MPC2,MPC3}.
This coding scheme allows computations to be performed on encrypted data and has been used for privacy-preserving machine learning \cite{HE3,HE7,MPC3}.
However, the HE methods often suffer from time-consuming cryptographic tools, and MPC techniques are difficult to generalize such primitives to a large number of clients.
Recently, distributed secure machine learning frameworks \cite{CodedPrivateML,NIPS_scalable} have been proposed for logistic regression problems. 
They apply Lagrange coding for secret data sharing and approximate the Sigmoid function by a polynomial function. 
This paper proposes DReS-FL to further extend these works to train deep neural networks in the FL setting.

\textbf{Secure aggregation.}
It has been shown recently that the clients' updates in FL may reveal substantial information about the local datasets, and the private training data can be reconstructed through model inversion attacks \cite{inference_attack,privacy_analysis,gradient_inverting}.
To prevent information leakage from the local models, \emph{secure aggregation} protocols \cite{secagg,fastsecagg,lightsecagg,jahani2022swiftagg+} have been developed to allow for global aggregation without revealing the parameters of clients' models.
Even if some clients may drop out, these protocols can still recover the aggregated results of the surviving clients.
Existing protocols essentially rely on two main principles, including a pairwise random-seed agreement for mask cancellation and secret sharing of the random seeds to construct the dropped masks \cite{secagg,turboAgg,fastsecagg,lightsecagg,SecAgg+,jahani2022swiftagg+}.
However, these approaches may suffer from severe performance degradation in non-IID settings, since the surviving clients in each round vary greatly, and thus the aggregate gradient is biased towards different data distributions.
Different from previous works, our proposed DReS-FL framework achieves the same privacy guarantee while solving the data heterogeneity problem.

\begin{figure}[t]
    \centering
    \includegraphics[width = 0.7\linewidth]{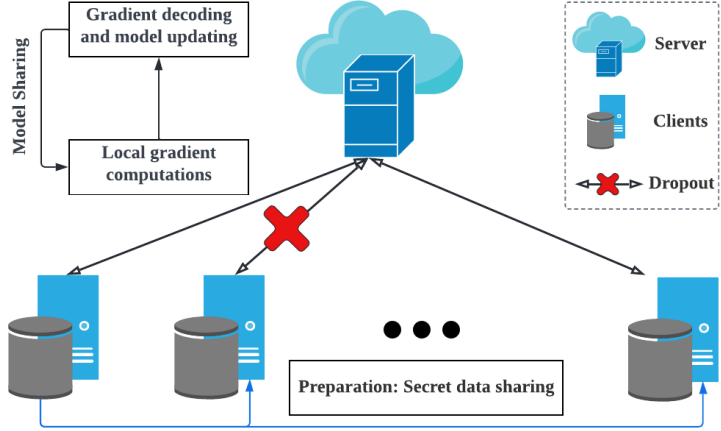}
    \caption{The DReS-FL system model. At the beginning of training, the clients secretly share the local datasets with each other. Then, the model parameters are iteratively trained by (1) local gradient computations and (2) gradient decoding and model updating until convergence.}
    \label{fig:framework}
\end{figure}

\section{System Model}

\label{sec:problem_setting}

We consider a federated learning framework as shown in Fig. \ref{fig:framework} that consists of one central server and $N$ data-owning clients.
Each client $i \in [N]$ holds a local dataset $(\mathbf{X}_{i},\mathbf{Y}_{i})$ of size $m_{i}$, where $\mathbf{X}_{i} \in \mathbb{R}^{m_{i} \times d_{x}}$ represents the set of input features of dimension $d_{x}$ and $\mathbf{Y}_{i} \in \mathbb{R}^{m_{i} \times d_{y}}$ corresponds to the output vector of dimension $d_{y}$.
Accordingly, the size of the {global dataset} $(\mathbf{X},\mathbf{Y})$ which concatenates all local datasets $(\mathbf{X}_{i},\mathbf{Y}_{i}), \forall i \in [N]$ is denoted as $m \triangleq \sum_{i=1}^{N} m_{i}$.
The clients aim to jointly train a neural network based on their local datasets without sharing private data samples.
Particularly, the gradients are computed locally and aggregated globally.
However, the local data may be highly heterogeneous, and the clients may drop out at any time unexpectedly, which makes the training process unstable.
Our goal is to improve the convergence performance by secret data sharing while preserving the privacy of local datasets.

\subsection{Lagrange Coded Computing for Federated Learning}
\label{sec:system_model_lcc}

The Lagrange coded computing (LCC) framework enables private computing in distributed settings to provide resiliency and efficiency \cite{LCC}.
The key idea is using Lagrange coding to encode the data for redundant distributed computing, which fits nicely with federated learning due to its dropout-resiliency and privacy requirement.
Specifically, the clients share their encoded datasets with each other and perform gradient computation over the encoded samples.
The server decodes the aggregate gradient after receiving the uploaded computation results from clients.
To provide a strong privacy guarantee for the datasets and correctly decode the gradient at the server, the gradient function should be a polynomial function in a finite field.
However, existing neural networks cannot satisfy this requirement, since the datasets are in the real field and the gradients are non-polynomial.





\textbf{Polynomial integer neural networks.}
We define a class of polynomial integer neural networks (PINNs) to ensure that the gradient is a polynomial function in a finite field $\mathbb{F}_{p}$ with a prime number $p$.
First, we transform the dataset $({\mathbf{X}},{\mathbf{Y}})$ from the real domain to the finite domain $(\overline{\mathbf{X}},\overline{\mathbf{Y}})$.
Besides, a PINN consists of affine transformation layers (e.g., fully connected layers and convolutional layers) and utilizes the quadratic function as the activation function.
The model parameters of PINNs are defined in the integer set $\mathbb{Z}_{p} \triangleq \{-\lfloor\frac{ p+1}{2}\rfloor,\ldots,\lfloor\frac{ p-1}{2}\rfloor\}$.
Given a feed-forward function $\bm{f}(\overline{\mathbf{X}};\mathbf{w})$ and selecting the mean squared error (MSE) as the loss function, the gradient of the input samples is a multivariate polynomial with integer coefficients, i.e., $\bm{g}(\overline{\mathbf{X}},\overline{\mathbf{Y}};\mathbf{w}) \triangleq \nabla_{\mathbf{w}} \| \overline{\mathbf{Y}} - {\bm{f}}(\overline{\mathbf{X}};\mathbf{w})\|_{2}^{2} \in \mathbb{Z}^{d_{w}}$, where $d_{w}$ represents the number of model parameters.
Denoting the number of quadratic activation layers as $L$, the degree of the gradient function\footnote{More details about how to calculate the degree of the gradient are deferred to Appendix \ref{appendix:degree_of_PINN}.} $\bm{g}(\overline{\mathbf{X}},\overline{\mathbf{Y}};\mathbf{w})$ is $\operatorname{deg}(\bm{g}) = 2^{L+1}$.

In particular, to avoid wrap-around when computing gradient in the finite field $\mathbb{F}_{p}$, we assume the prime number $p$ is sufficiently large without leading to overflow errors in the integer set $\mathbb{Z}_{p}$.


\textbf{Lagrange coding.}
The proposed DReS-FL framework uses Lagrange polynomials to achieve a $D$-resilient, $T$-private, and $K$-efficient coding scheme.
$D$-Resiliency means that the global gradient can be decoded by the server in the presence of up to $D$ client dropouts.
$T$-privacy denotes that no information about local datasets can be inferred from the encoded data even if up to $T$ clients collude.
$K$-efficiency corresponds to the complexity of the coding scheme.
Specifically, each private dataset is split into $K$ shards in Lagrange coding, and the size of the encoded dataset is proportional to $1/K$.
Therefore, increasing the value of $K$ reduces the communication overhead of data sharing.
The following theorem characterizes the $(D,T,K)$-achievable coding scheme, and its proof is available in Section IV of \cite{LCC}.

\begin{theorem}
\label{theorem:DTK}
    Given the client number $N$ and the degree of the gradient function $\operatorname{deg}(\bm{g})$, a $D$-resilient, $T$-private, and $K$-efficient Lagrange coding scheme is achievable, as long as
\begin{equation}
   D + \operatorname{deg}(\bm{g})(K+T-1) + 1  \leq N.  
\end{equation}
    
\end{theorem}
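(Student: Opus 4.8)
The plan is to exhibit an explicit Lagrange coding scheme and then verify resiliency, privacy, and efficiency against the stated inequality; the inequality itself will drop out of a degree count for polynomial interpolation. First I would partition the data to be encoded (the finite-field global dataset $\overline{\mathbf{X}}$) into $K$ shards $\overline{\mathbf{X}}_1,\ldots,\overline{\mathbf{X}}_K$, which is the source of the $1/K$ encoded size and hence of $K$-efficiency. To inject privacy I would draw $T$ auxiliary shards $\mathbf{Z}_1,\ldots,\mathbf{Z}_T$ independently and uniformly at random over $\mathbb{F}_p$. Fixing $K+T$ distinct interpolation points $\beta_1,\ldots,\beta_{K+T}\in\mathbb{F}_p$, I would define the unique polynomial $u(z)$ of degree at most $K+T-1$ satisfying $u(\beta_j)=\overline{\mathbf{X}}_j$ for $j\in[K]$ and $u(\beta_{K+\ell})=\mathbf{Z}_\ell$ for $\ell\in[T]$ by Lagrange interpolation, and then choose $N$ distinct evaluation points $\alpha_1,\ldots,\alpha_N$, assigning client $i$ the encoded share $\widetilde{\mathbf{X}}_i=u(\alpha_i)$. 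This requires $\mathbb{F}_p$ to contain enough distinct elements, which is consistent with the earlier assumption that $p$ is large.

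For resiliency and decodability, the key observation is that a surviving client $i$ returns $\bm{g}(u(\alpha_i))$, and since $\bm{g}$ is a polynomial of degree $\operatorname{deg}(\bm{g})$, the composition $h(z)\triangleq\bm{g}(u(z))$ is a polynomial of degree at most $\operatorname{deg}(\bm{g})(K+T-1)$. Interpolating the coefficients of $h$ therefore requires $\operatorname{deg}(\bm{g})(K+T-1)+1$ distinct evaluations, which are recoverable because the Vandermonde matrix on any set of that many distinct $\alpha_i$ is nonsingular. With up to $D$ dropouts the server collects $N-D$ evaluations, so decoding succeeds whenever $N-D\geq\operatorname{deg}(\bm{g})(K+T-1)+1$, which rearranges to exactly $D+\operatorname{deg}(\bm{g})(K+T-1)+1\leq N$. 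Having interpolated $h$, the server reads off the shard-wise gradients as $h(\beta_j)=\bm{g}(\overline{\mathbf{X}}_j)$ for $j\in[K]$ and sums them to obtain the global gradient.

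The privacy guarantee is the step I expect to be the main obstacle, since resiliency is essentially Vandermonde invertibility. For any colluding set $\mathcal{T}$ with $|\mathcal{T}|=T$, I would show that the joint distribution of $\{\widetilde{\mathbf{X}}_i\}_{i\in\mathcal{T}}$ is independent of $\overline{\mathbf{X}}_1,\ldots,\overline{\mathbf{X}}_K$. The argument expresses these $T$ shares as a fixed linear map of the $K$ data shards plus the $T$ random masks, and the crux is that the $T\times T$ block acting on the masks is invertible; this follows from the nonsingularity of the generalized Vandermonde (MDS-type) matrix induced by the distinct points $\{\alpha_i\}_{i\in\mathcal{T}}$ together with $\{\beta_{K+\ell}\}_{\ell\in[T]}$. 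Invertibility lets one re-center the uniform masks so that, conditioned on any fixed value of the data shards, the shares are uniformly distributed, yielding zero mutual information and hence information-theoretic $T$-privacy. Combining the three verified properties shows the scheme is $(D,T,K)$-achievable precisely under the stated inequality, matching the result of Section IV of \cite{LCC}.
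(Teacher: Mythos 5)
Your proof is correct and follows essentially the same route as the argument the paper relies on: the paper itself does not prove Theorem \ref{theorem:DTK} but cites Section IV of \cite{LCC}, and your construction is exactly that achievability proof ($K$ data shards plus $T$ uniform masks interpolated through $\beta_1,\ldots,\beta_{K+T}$, the degree count $\operatorname{deg}(\bm{g})(K+T-1)$ with Vandermonde interpolation on any $\operatorname{deg}(\bm{g})(K+T-1)+1$ surviving evaluations for $D$-resiliency, and invertibility of the $T\times T$ mask block, a Cauchy-type matrix, for $T$-privacy). The one detail to make explicit is that the evaluation points $\{\alpha_i\}_{i\in[N]}$ must be chosen disjoint from $\{\beta_j\}_{j\in[K+T]}$, as the paper stipulates; otherwise a colluding client could receive a raw shard and the Cauchy-matrix nonsingularity underlying your privacy step would fail.
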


    

\begin{remark}
    As shown in Theorem \ref{theorem:DTK}, there is a tradeoff among resiliency ($D$), privacy ($T$), and efficiency ($K$).
    As the sum of $T$ and $K$ increases, the proposed framework tolerates fewer client dropouts.
    Specifically, the maximum value of $D$ is $N - 1 - \deg(\bm{g})$ by setting $T=K=1$.
    
\end{remark}

\begin{remark}
Setting the privacy parameter $T \geq 1$, the gradient computation over the encoded samples leaks no private information according to the data process inequality.
This implies that the proposed DReS-FL framework achieves the same privacy guarantee as the secure aggregation protocols. Specifically, the server learns no information about the private dataset from a single client's computation result.

\end{remark}

\begin{figure}[t]
    \centering
    \includegraphics[width = 0.5\linewidth]{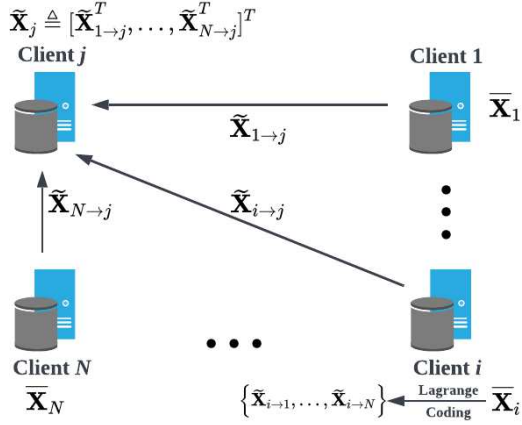}
    \caption{The secret data sharing scheme in the DReS-FL framework. 
    Every client $i \in [N]$ secretly shares the local dataset $\overline{\mathbf{X}}_{i}$ to other clients $j \in [N] \backslash\{i\}$ by sending the encoded samples $\widetilde{\mathbf{X}}_{i \rightarrow j}$. 
    The client $j$ will receive $\widetilde{\mathbf{X}}_{j} \triangleq [\widetilde{\mathbf{X}}_{1 \rightarrow j}^{T},\ldots,\widetilde{\mathbf{X}}_{N \rightarrow j}^{T}]^{T}$ as an encoded version of the global dataset.}
    \label{fig:secret_data_sharing_framework}
\end{figure}


\section{DReS-FL Framework}
\label{Sec:Proposed_method}

DReS-FL consists of two main phases, as shown in Fig. \ref{fig:framework}.
In the first phase, the private datasets are transformed from the real domain to the finite field, and data-owning clients secretly share datasets by Lagrange coding.
Then, the server and the clients train a PINN iteratively via (1) local gradient computations and (2) gradient decoding and model updating.

\subsection{Data Transformation and Secret Sharing}
\label{subsec:transformation}

To guarantee information-theoretic privacy, each client has to mask the datasets in a finite field $\mathbb{F}_{p}$ using uniformly random matrices.
Firstly, the local datasets $(\mathbf{X}_{i},\mathbf{Y}_{i})$ are converted from the real domain to the finite field $(\overline{\mathbf{X}}_{i},\overline{\mathbf{Y}}_{i})$.
Considering an element-wise function $\phi(z) = z + c$ that transforms a real value to a non-negative number by adding a proper scalar $c$\footnote{The scalar $c$ could be the absolute value of the minimum entry in dataset, which is set to 0 in the experiments.}, we define $\overline{\mathbf{X}} \triangleq Round(2^{l} \cdot \phi( \mathbf{X}))$, where the rounding operation is element-wise that quantizes each entry to its closest integer, and $l \in \mathbb{Z}$ controls the quantization loss.
We adopt the notation $\overline{\mathcal{D}}$ to represent the \emph{global dataset}, which is the concatenation of all the local datasets $(\overline{\mathbf{X}}_{i},\overline{\mathbf{Y}}_{i})$ for $ i \in [N] $.


After converting the private datasets to the finite field, the clients adopt a $(D,T,K)$-achievable Lagrange coding to encode local data for secret sharing.
First, each client $i \in [N]$ partitions its local dataset to $K$ shards as $ \overline{\mathbf{X}}_{i} \triangleq [\overline{\mathbf{X}}_{i}^{(1)T}, \ldots,\overline{\mathbf{X}}_{i}^{(K)T}]^{T}$ and $ \overline{\mathbf{Y}}_{i} \triangleq [\overline{\mathbf{Y}}_{i}^{(1)T}, \ldots,\overline{\mathbf{Y}}_{i}^{(K)T}]^{T}$.
Assuming that $m_{i}$ is divisible by $K$, we have $\overline{\mathbf{X}}_{i}^{(k)} \in \mathbb{F}_{p}^{\frac{m_{i}}{K}\times d_{x}}$ and $\overline{\mathbf{Y}}_{i}^{(k)} \in \mathbb{F}_{p}^{\frac{m_{i}}{K}\times d_{y}}$ for $k \in [K]$. 
A large value of $K$ helps to reduce the complexity in secret data sharing.
Then, the clients add padding from $T$ uniform random masks to the data samples for privacy protection.
Each client $i \in [N]$ forms the following polynomials $\mathbf{u}_{i}: \mathbb{F}_{p} \rightarrow \mathbb{F}_{p}^{\frac{m_{i}}{K}\times d_{x}}$ and $\mathbf{v}_{i}: \mathbb{F}_{p} \rightarrow \mathbb{F}_{p}^{\frac{m_{i}}{K}\times d_{y}}$ of degree $K + T - 1$ to encode the local dataset:
\begin{align}
\mathbf{u}_{i}(z) \triangleq & \sum_{k \in[K]} \overline{\mathbf{X}}_{i}^{(k)} \cdot \prod_{j \in[K+T] \backslash\{k\}} \frac{z-\beta_{j}}{\beta_{k}-\beta_{j}}+\sum_{k=K+1}^{K+T} \mathbf{U}_{i}^{(k)} \cdot \prod_{j \in[K+T] \backslash\{k\}} \frac{z-\beta_{j}}{\beta_{k}-\beta_{j}}, \label{equ:encode_x}\\
\mathbf{v}_{i}(z) \triangleq & \sum_{k \in[K]} \overline{\mathbf{Y}}_{i}^{(K)} \cdot \prod_{j \in[K+T] \backslash\{k\}} \frac{z-\beta_{j}}{\beta_{k}-\beta_{j}}+\sum_{k=K+1}^{K+T} \mathbf{V}_{i}^{(k)} \cdot \prod_{j \in[K+T] \backslash\{k\}} \frac{z-\beta_{j}}{\beta_{k}-\beta_{j}},
\label{equ:encode_y}
\end{align}
where $\{ \mathbf{U}_{i}^{(k)} \}$'s and $\{ \mathbf{V}_{i}^{(k)} \}$'s are random noise matrices uniformly sampled from $ \mathbb{F}_{p}^{\frac{m}{K} \times d_{x}}$ and $ \mathbb{F}_{p}^{\frac{m}{K} \times d_{y}}$, respectively.
These matrices mask the local datasets and provide a privacy guarantee against up to $T$ colluding workers.
The clients and the server agree on $K+T$ distinct elements $\left\{\beta_{1}, \ldots, \beta_{K+T}\right\}$ from the finite field $\mathbb{F}_{p}$ in advance.
Particularly, setting $z = \beta_{k}$ for $k \in [K]$, we reconstruct the data shard $(\mathbf{u}_{i}(\beta_{k}),\mathbf{v}_{i}(\beta_{k})) = (\overline{\mathbf{X}}_{i}^{(k)},\overline{\mathbf{Y}}_{i}^{(k)})$.
All the clients use the same $N$ distinct elements $\left\{\alpha_{1},\ldots,\alpha_{N}\right\}$ selected from $\mathbb{F}_{p}$ to encode the private datasets, where $\left\{\alpha_{j}\right\}_{j \in[N]} \cap\left\{\beta_{k}\right\}_{k \in[K+T]}=\varnothing$.
Each client $i$ obtains $N$ encoded datasets $(\widetilde{\mathbf{X}}_{i \rightarrow j},\widetilde{\mathbf{Y}}_{i \rightarrow j}) \triangleq (\mathbf{u}_{i}(\alpha_{j}), \mathbf{v}_{i}(\alpha_{j}))$ for $j \in [N]$, where each $(\widetilde{\mathbf{X}}_{i \rightarrow j},\widetilde{\mathbf{Y}}_{i \rightarrow j})$ is sent to client $j$ from client $i$.
All the received encoded datasets at client $j$ are represented as $(\widetilde{\mathbf{X}}_{j},\widetilde{\mathbf{Y}}_{j})$, where $\widetilde{\mathbf{X}}_{j} \triangleq [\widetilde{\mathbf{X}}_{ 1 \rightarrow j}^{T},\ldots,\widetilde{\mathbf{X}}_{N \rightarrow j}^{T}]^{T} \in \mathbb{F}_{p}^{\widetilde{m} \times d_{x}}$ and $\widetilde{\mathbf{Y}}_{j} \triangleq [\widetilde{\mathbf{Y}}_{1 \rightarrow j}^{T},\ldots,\widetilde{\mathbf{Y}}_{N \rightarrow j}^{T}]^{T} \in \mathbb{F}_{p}^{\widetilde{m} \times d_{y}}$ for $j \in [N]$.
Accordingly, the number of samples in the encoded dataset is $\widetilde{m} \triangleq \frac{1}{K}\sum_{i=1}^{n} m_{i}$.
Fig. \ref{fig:secret_data_sharing_framework} demonstrates the secret data sharing scheme.

\begin{table}[t]
\begin{minipage}[h]{\textwidth}
\centering
\caption{Primary notations and descriptions}
\footnotesize
\resizebox{0.99\textwidth}{!}{
\begin{tabular}{>{\centering} m{0.125\columnwidth}|m{0.315\columnwidth}|>{\centering}m{0.125\columnwidth}|m{0.315\columnwidth} }  \hline
Notation & Description & Notation & Description \\ \hline
$(\overline{\mathbf{X}}_{i},\overline{\mathbf{Y}}_{i})$      &  Transformed dataset at client $i$ in a finite filed $\mathbb{F}_{p}$      &    $\overline{\mathcal{D}}$  &  
 The global dataset that is a concatenation of clients' datasets $(\overline{\mathbf{X}}_{i},\overline{\mathbf{Y}}_{i})$ for $i \in [N]$          \\ \hline
 $(\overline{\mathbf{X}}_{i}^{(k)},\overline{\mathbf{Y}}_{i}^{(k)})$      &  $k$-th data shard at client $i$    & $(\widetilde{\mathbf{X}}_{i \rightarrow j},\widetilde{\mathbf{Y}}_{i \rightarrow j})$      &  Encoded dataset sent from client $i$ to client $j$           \\ \hline
 $(\widetilde{\mathbf{X}}_{j},\widetilde{\mathbf{Y}}_{j})$      &  Concatenation of received encoded datasets $(\widetilde{\mathbf{X}}_{i \rightarrow j},\widetilde{\mathbf{Y}}_{i \rightarrow j})$ for $i \in [N]$ at client $j$   & $\mathbf{C}^{(t)},\indext$ &  Row selection matrix $\mathbf{C}^{(t)}$ for data sampling in round $t$  and the corresponding index set $\indext$             \\ \hline
 $ (\widetilde{\mathbf{X}}_{j}^{\left(\mathcal{I}_{t}\right)},\widetilde{\mathbf{Y}}_{j}^{\left(\mathcal{I}_{t}\right)})$ &  Local mini-batch sampled from $(\widetilde{\mathbf{X}}_{j},\widetilde{\mathbf{Y}}_{j})$ at client $j$ in round $t$ based on $\mathcal{I}_{t}$      &  $ (\overline{\mathbf{X}}_{\mathcal{I}_{t}}^{(k)},\overline{\mathbf{Y}}_{\mathcal{I}_{t}}^{(k)})$ &   $k$-th global mini-batch sampled from $\overline{\mathcal{D}}$ in round $t$ based on $\mathcal{I}_{t}$     \\ \hline
\end{tabular}
}
\end{minipage}\vspace{-0.5cm}\vfill
\begin{center}
\begin{minipage}[h]{0.99\textwidth}
\centering
\begin{algorithm}[H]
\footnotesize
\caption{DReS-FL}
\begin{algorithmic}[1]
\Require Local datasets $(\mathbf{X}_{i},\mathbf{Y}_{i})$ for $i \in [N]$, batch size $b$, initialized parameters $\mathbf{w}^{(0)} \in \mathbb{Z}_{p}^{d_{w}}$, distinct elements $\left\{\alpha_{j}\right\}_{j \in[N]}$ and $\left\{\beta_{k}\right\}_{k \in[K+T]}$, prime number $p$, training round $\tau$, learning rate $\eta$.
\Ensure Model parameter $\mathbf{w}^{(\tau)}$.
\State Clients encode the local datasets according to (\ref{equ:encode_x}) and (\ref{equ:encode_y}) and deliver them to other clients.
\For {$t = 1,2,\ldots, \tau$}
\State Server sends the model parameters $\mathbf{w}^{(t)}$ to the clients.
\For{$j = 1,\ldots, N$}
\State Client $j$ performs gradient computation on mini-batches $(\widetilde{\mathbf{X}}_{j}^{(\mathcal{I}_{t})},\widetilde{\mathbf{Y}}_{j}^{(\mathcal{I}_{t})})$.
\State Upload local computation results $\widetilde{{\bm{g}}}(\widetilde{\mathbf{X}}_{j}^{(\mathcal{I}_{t})},\widetilde{\mathbf{Y}}_{j}^{(\mathcal{I}_{t})};\mathbf{w}^{(t)})$ to the server.
\EndFor
\If {Server receives at least $ \operatorname{deg} (\bm{g}) (K+T-1)+1 $ uploads}
\State Decode $K$ global gradients $\widetilde{\bm{g}}(\overline{\mathbf{X}}_{\mathcal{I}_{t}}^{(k)},\overline{\mathbf{Y}}_{\mathcal{I}_{t}}^{(k)};\mathbf{w}^{(t)})$ for $k \in [K]$ by polynomial interpolation.
\State Convert gradients from the finite field to the integral domain ${\bm{g}}(\overline{\mathbf{X}}_{\mathcal{I}_{t}}^{(k)},\overline{\mathbf{Y}}_{\mathcal{I}_{t}}^{(k)};\mathbf{w}^{(t)})$ by (\ref{equ:gradient_transform}).
\State Update the model  by (\ref{equ:stochastic_quan}) based on the aggregate gradient $\sum_{k=1}^{K} {\bm{g}}(\overline{\mathbf{X}}_{\mathcal{I}_{t}}^{(k)},\overline{\mathbf{Y}}_{\mathcal{I}_{t}}^{(k)};\mathbf{w}^{(t)})$.
\EndIf
\EndFor
\end{algorithmic}
\label{algor:Proposed_method}
\end{algorithm}
\end{minipage}\vspace{-0.7cm}
\end{center}
\end{table}

\subsection{Federated Training}

\textbf{Local Gradient Computation.} The server randomly initializes a PINN at the beginning of the training process, and the model parameters are constrained to an integer set $\mathbb{Z}_{p}$ during the training process.
In each communication round, the server sends the model parameters to the clients, and they compute the stochastic gradient over the mini-batches with size $b$.
Particularly, we assume that all the clients use the same row selection matrix $\mathbf{C}^{(t)} \in \{0,1\}^{b \times \widetilde{m}}$ for data sampling in each round $t$\footnote{This can be achieved by setting the same random seed across all the clients.
The weighted sampling method has been adopted in this work, where the number of sampled data from $\widetilde{\mathbf{X}}_{i \rightarrow j}$ is proportional to $m_{i}$ for $i \in [N]$. Note that other sampling schemes can also be applied in DReS-FL.
}, and the mini-batch at each client $j \in [N]$ is determined by $[\widetilde{\mathbf{X}}_{j}^{(\mathcal{I}_{t})},\widetilde{\mathbf{Y}}_{j}^{(\mathcal{I}_{t})}] \triangleq \mathbf{C}^{(t)} [\widetilde{\mathbf{X}}_{j},\widetilde{\mathbf{Y}}_{j}]$.
Here, $\mathcal{I}_{t} = \{l_{1}^{(t)},\ldots,l_{b}^{(t)}\} \subseteq [\widetilde{m}]$ is a randomly selected index set in the $t$-th round with $l_{i} \in [\widetilde{m}]$ for $i \in [b]$.
The entries of $\mathbf{C}^{(t)}$ satisfy $\mathbf{C}_{i,l_{i}}^{(t)} = 1$ for $i \in [b]$, and other entries are set to zero.
Each client $j$ computes the stochastic gradient $\widetilde{\bm{g}}(\widetilde{\mathbf{X}}_{j}^{(\mathcal{I}_{t})},\widetilde{\mathbf{Y}}_{j}^{(\mathcal{I}_{t})};\mathbf{w}^{(t)}) \triangleq {\bm{g}}(\widetilde{\mathbf{X}}_{j}^{(\mathcal{I}_{t})},\widetilde{\mathbf{Y}}_{j}^{(\mathcal{I}_{t})};\mathbf{w}^{(t)}) \mod p$ in the finite field, and uploads the result to the server.
Particularly, each $\widetilde{\bm{g}}(\widetilde{\mathbf{X}}_{j}^{(\mathcal{I}_{t})},\widetilde{\mathbf{Y}}_{j}^{(\mathcal{I}_{t})};\mathbf{w}^{(t)})$ amounts to an evaluation of the polynomial $\widetilde{\bm{g}}(\mathbf{u}_{\mathcal{I}_{t}}(z),\mathbf{v}_{\mathcal{I}_{t}}(z);\mathbf{w}^{(t)})$ at the point $z = \alpha_{j}$, where two $(K+T-1)$-degree polynomial functions $\mathbf{u}_{\mathcal{I}_{t}}: \mathbb{F}_{p} \rightarrow \mathbb{F}_{p}^{b\times d_{x}}$ and $\mathbf{v}_{\mathcal{I}_{t}}: \mathbb{F}_{p} \rightarrow \mathbb{F}_{p}^{b\times d_{y}}$ are defined as follows:
\begin{align}
    \mathbf{u}_{\mathcal{I}_{t}}(z) \triangleq & \sum_{k \in[K]} \overline{\mathbf{X}}_{\mathcal{I}_{t}}^{(k)} \cdot \prod_{j \in[K+T] \backslash\{k\}} \frac{z-\beta_{j}}{\beta_{k}-\beta_{j}}+\sum_{k=K+1}^{K+T} \mathbf{U}_{\mathcal{I}_{t}}^{(k)} \cdot \prod_{j \in[K+T] \backslash\{k\}} \frac{z-\beta_{j}}{\beta_{k}-\beta_{j}}, \label{equ:mini_batch_polynomial_x} \\
    \mathbf{v}_{\mathcal{I}_{t}}(z) \triangleq & \sum_{k \in[K]} \overline{\mathbf{Y}}_{\mathcal{I}_{t}}^{(k)} \cdot \prod_{j \in[K+T] \backslash\{k\}} \frac{z-\beta_{j}}{\beta_{k}-\beta_{j}}+\sum_{k=K+1}^{K+T} \mathbf{V}_{\mathcal{I}_{t}}^{(k)} \cdot \prod_{j \in[K+T] \backslash\{k\}} \frac{z-\beta_{j}}{\beta_{k}-\beta_{j}}, \label{equ:mini_batch_polynomial_y}
\end{align}
with
\begin{align*}
    \overline{\mathbf{X}}_{\mathcal{I}_{t}}^{(k)} =  & \mathbf{C}^{(t)} [\overline{\mathbf{X}}_{1}^{(k)T},\ldots,\overline{\mathbf{X}}_{N}^{(k)T}]^{T} \in \mathbb{F}_{p}^{b \times d_{x}}, \quad {\mathbf{U}}_{\mathcal{I}_{t}}^{(k)} =   \mathbf{C}^{(t)} [{\mathbf{U}}_{1}^{(k)T},\ldots,{\mathbf{U}}_{N}^{(k)T}]^{T} \in \mathbb{F}_{p}^{b \times d_{x}}, \\
    \overline{\mathbf{Y}}_{\mathcal{I}_{t}}^{(k)} = &  \mathbf{C}^{(t)} [\overline{\mathbf{Y}}_{1}^{(k)T},\ldots,\overline{\mathbf{Y}}_{N}^{(k)T}]^{T} \in \mathbb{F}_{p}^{b \times d_{y}}, \quad {\mathbf{V}}_{\mathcal{I}_{t}}^{(k)} =   \mathbf{C}^{(t)} [{\mathbf{V}}_{1}^{(k)T},\ldots,{\mathbf{V}}_{N}^{(k)T}]^{T} \in \mathbb{F}_{p}^{b \times d_{y}}.
\end{align*}
Every $(\overline{\mathbf{X}}_{\mathcal{I}_{t}}^{(k)}, \overline{\mathbf{Y}}_{\mathcal{I}_{t}}^{(k)}) = (\mathbf{u}_{\mathcal{I}_{t}}(\beta_{k}), \mathbf{v}_{\mathcal{I}_{t}}(\beta_{k}))$ for $k \in [K]$ is a \emph{global mini-batch} selected from the global dataset $\overline{\mathcal{D}}$.


\textbf{Gradient Decoding and Model Updating.}
According to (\ref{equ:mini_batch_polynomial_x}) and (\ref{equ:mini_batch_polynomial_y}), the server can obtain the global gradients $ \widetilde{\bm{g}}(\overline{\mathbf{X}}_{\mathcal{I}_{t}}^{(k)},\overline{\mathbf{Y}}_{\mathcal{I}_{t}}^{(k)};\mathbf{w}^{(t)})$ for $k \in [K]$ by evaluating the polynomial $\widetilde{\bm{g}}(\mathbf{u}_{\mathcal{I}_{t}}(z),\mathbf{v}_{\mathcal{I}_{t}}(z);\mathbf{w}^{(t)})$ at the point $z = \beta_{k}$.
But, the server needs to first recover the coefficients of this polynomial, which is a composition of the encoding polynomials $(\mathbf{u}_{\mathcal{I}_{t}}(z),\mathbf{v}_{\mathcal{I}_{t}}(z))$ and the gradient function $\widetilde{\bm{g}}$.
As the degree of the composite polynomial is $ \text{deg} (\bm{g}) (K+T-1)$, the server requires at least $ \text{deg} (\bm{g}) (K+T-1)+1 $ local computation results (i.e., evaluation points) to interpolate it\footnote{Note that if the server cannot receive enough results due to the client dropouts, the training protocol continues to the next epoch without gradient decoding and model updating.}.
This implies that the proposed DReS-FL framework can tolerate at most $D = N-\operatorname{deg}(\boldsymbol{g})(K+T-1)-1$ client dropouts.

After decoding the global gradients, the server converts them from the finite field to the integer set $\mathbb{Z}_{p}$ by $ {\bm{g}}(\overline{\mathbf{X}}_{\mathcal{I}_{t}}^{(k)},\overline{\mathbf{Y}}_{\mathcal{I}_{t}}^{(k)};\mathbf{w}^{(t)}) =\psi(\widetilde{\bm{g}}(\overline{\mathbf{X}}_{\mathcal{I}_{t}}^{(k)},\overline{\mathbf{Y}}_{\mathcal{I}_{t}}^{(k)};\mathbf{w}^{(t)}))$, where $\psi(z)$ is an element-wise function defined as follows:
\begin{equation}
    \psi(z)=\left\{\begin{array}{lll}
z & \text { if } & 0 \leq z<\frac{p-1}{2}, \\
z -p & \text { if } & \frac{p-1}{2} \leq z<p.
\end{array}\right.
\label{equ:gradient_transform}
\end{equation}
As we assume that the prime number $p$ is sufficiently large, the converted gradients do not have overflow errors.
Thus, the central sever updates the global model by $\mathbf{w}^{(t+1)} = \mathbf{w}^{(t)} - Q(\frac{\eta}{b K} \sum_{k=1}^{K} {\bm{g}}(\overline{\mathbf{X}}_{\mathcal{I}_{t}}^{(k)},\overline{\mathbf{Y}}_{\mathcal{I}_{t}}^{(k)};\mathbf{w}^{(t)})$, where $\eta$ denotes the learning rate and $bK$ represents the \emph{global batch size}\footnote{The global batch size corresponds to the number of samples used for gradient computations in each round.}.
$Q(z)$ is a stochastic quantization function to ensure the model parameters are in the integer set $\mathbb{Z}_{p}$ after updating, which is defined as follows:
\begin{equation}
\label{equ:stochastic_quan}
Q(z)= \begin{cases}\lfloor z\rfloor & \text { with probability } 1-(z-\lfloor z\rfloor) \\ \lfloor z\rfloor+1 & \text { with probability } z-\lfloor z\rfloor. \end{cases}
\end{equation}
Besides, the probability of rounding $z$ to $\lfloor z\rfloor$ is proportional to the proximity of $z$ to $\lfloor z\rfloor$ so that the stochastic rounding is unbiased.
The overall procedure is summarized in Algorithm \ref{algor:Proposed_method}.


\section{Convergence Analysis}
\label{sec:theorem}

In this section we characterize the convergence performance of PINNs, which relies on the fact that the global gradients in the training process are unbiased.
Define the empirical risk as $\ell(\mathbf{w}) \triangleq \mathbb{E}_{(\overline{\mathbf{X}},\overline{\mathbf{Y}}) \sim \overline{\mathcal{D}}} \|\overline{\mathbf{Y}}-\boldsymbol{f}(\overline{\mathbf{X}} ; \mathbf{w})\|_{2}^{2}$ and the corresponding gradient as $\bm{g}_{e}\left(\mathbf{w}\right) \triangleq \mathbb{E}_{(\overline{\mathbf{X}},\overline{\mathbf{Y}}) \sim \overline{\mathcal{D}}}[\bm{g}(\overline{\mathbf{X}},\overline{\mathbf{Y}};\mathbf{w})]$.
The variables $(\overline{\mathbf{X}},\overline{\mathbf{Y}})$ are drawn from the distribution of the global dataset $\overline{\mathcal{D}}$.
To prove that DReS-FL guarantees convergence to the optimal model parameters, we first present the following assumptions to facilitate the analysis. 

\begin{assumption}\label{ass-1}
($L$-smoothness)
There exists a constant $L>0$ such that for all  $ \mathbf{w}_1,\mathbf{w}_2\in  \mathbb{Z}_{p}^{d_{w}}$, we have $ \| \bm{g}_{e}(\mathbf{w}_1) - \bm{g}_{e}(\mathbf{w}_2) \|_2 \leq L \| \mathbf{w}_1 - \mathbf{w}_2 \|_2$.
\end{assumption}

\begin{assumption}\label{ass-2}
(Unbiased and variance-bounded stochastic gradient)
There exists a constant $\sigma>0$ such that any stochastic gradient ${\bm{g}}(\overline{\mathbf{X}}_{\indext}^{(k)},\overline{\mathbf{Y}}_{\indext}^{(k)};\mathbf{w}^{(t)})$ satisfies $\mathbb{E}\left[\frac{1}{b} {\bm{g}}(\overline{\mathbf{X}}_{\indext}^{(k)},\overline{\mathbf{Y}}_{\indext}^{(k)};\mathbf{w}^{(t)}) \right] = \bm{g}_{e}(\mathbf{w}^{(t)})$ and $\mathbb{E}\left[ \| \frac{1}{b}{\bm{g}}(\overline{\mathbf{X}}_{\indext}^{(k)},\overline{\mathbf{Y}}_{\indext}^{(k)};\mathbf{w}^{(t)}) - \bm{g}_{e}(\mathbf{w}^{(t)}) \|^2 \right] \leq \sigma^2$.

\end{assumption}

\begin{assumption}\label{ass-3}
(Unbiased and variance-bounded rounding operation)
There exists a constant $\gamma > 0$ such that for any $z \in \mathbb{R}$, the stochastic quantization operation $Q (\cdot)$ satisfies $\mathbb{E}\left[ Q(z) \right] = z$ and $\mathbb{E}\left[ \| Q(z) - z \|^2 \right] \leq \gamma^2 z^2$.
\end{assumption}

With the above preparations, we have the following theorem which ensures the convergence. The proof is deferred to Appendix \ref{appendix:proof}.

\begin{theorem}\label{thm-convergence}
(Convergence) Denote $\mathbf{w}^*$ as the first-order optimal solution. With Assumption \ref{ass-1}-\ref{ass-3}, selecting the learning rate as $\eta=\mathcal{O}\left( 1/\sqrt{\tau^{\prime}} \right)$ such that $\Psi \triangleq 1 - \eta L/2  - \eta \gamma^2 L/2  > 0$, after $\tau^{\prime}$ times of model updates, we have:
\begin{equation}
\frac{1}{\tau^{\prime}} \sum_{t=1}^{\tau^{\prime}} \mathbb{E}\left[ \| \bm{g}_{e}(\mathbf{w}^{(t)}) \|^2 \right] \leq \frac{\ell(\mathbf{w}^{(0)}) - \ell(\mathbf{w}^*)}{\eta \tau^{\prime} \Psi} + \frac{ \eta^2 L\sigma^2}{2bK\Psi} (\gamma^2 + 1).   
\end{equation}
\end{theorem}

\section{Experiments}

\label{sec:exp}

\begin{table}[t]
\centering
\caption{Test accuracy $(\%)$ of different methods. Each experiment is repeated five times. Best results are shown in italic and second best results are in bold.}
\resizebox{\textwidth}{!}{
\begin{tabular}{c|cccccc}
\hline
Dataset      & MNIST & Fashion-MNIST & EMNIST & CIFAR-10 & CIFAR-100 & SVHN \\ \hline
FedAvg       & $96.17 \pm 0.05$          &  $81.20 \pm 0.07$          &     $71.50 \pm 0.28$         & $89.54 \pm 0.09$             &   $67.71 \pm 0.26$           &   $83.82 \pm 0.20$            \\
FedAvg-IS   &   $97.06 \pm 0.10$    &     $85.94 \pm 0.16$        &    $77.09 \pm 0.34$          &    $89.83 \pm 0.07$          &    $68.92 \pm 0.14$          &    $85.27 \pm 0.09$                         \\
SCAFFOLD     &   $ 71.89 \pm 3.92 $          &  $55.22 \pm 1.83$            &  $55.15 \pm 5.95$            &   $54.17 \pm 9.13 $           &   $29.97 \pm 1.73$           &   $51.27 \pm 3.43 $            \\
DReS-FL (Ours)    & $\textbf{97.38}\pm \textbf{0.08} $ &  $\textbf{86.60} \pm \textbf{0.32} $       &   $\textbf{78.04} \pm \textbf{0.29}$    &  $\textbf{} $    $\textbf{90.31} \pm \textbf{0.19}$        &    $\textbf{69.15} \pm \textbf{0.27}$          &   $\textbf{86.04} \pm \textbf{0.15}$         \\ \hline
Centralized & $\emph{97.99} \pm \emph{0.04} $ & $\emph{89.02} \pm \emph{0.11}$ & $\emph{82.45} \pm \emph{0.23}$  & $\emph{90.37} \pm \emph{0.12}$ & $\emph{71.12} \pm \emph{0.09}$ & $\emph{86.18} \pm \emph{0.03}$ \\ \hline
\end{tabular}
}
\label{table:manuscript_accuracy}
\end{table}

\subsection{Experimental Setup}

\textbf{Dataset.} We evaluate our proposed algorithm on several benchmark datasets: MNIST \cite{mnist}, Fashion-MNIST \cite{fashionMNIST}, EMNIST (Balanced) \cite{EMNIST}, CIFAR-10 \cite{cifar}, CIFAR-100 \cite{cifar}, and SVHN \cite{SVHN}.
Specifically, the extra training samples in the SVHN dataset are not utilized.
To simulate the non-IID data distribution, we assume there are $N = 20$ clients in the learning system and adopt the skewed label partition \cite{non-iid_skewed} to shuffle the datasets. Specifically, we sort a dataset by the labels, divide it into $N$ shards, and assign one shard to each client.
To simulate the client dropouts in the training process, we consider an extreme scenario, where the dropout rate of each client is set to 0.99 with a probability of 0.5 or is uniformly sampled from $[0,0.1]$ otherwise.
More details of the datasets are deferred to Appendix \ref{appendix:exp}.



\textbf{Model structures.} We adopt a multi-layer perception (MLP) with two hidden layers for the image classification tasks on MNIST, Fashion-MNIST, and EMNIST datasets. Each hidden layer contains 64 neurons.
For CIFAR-10, CIFAR-100, and SVHN datasets, we resize the input images from $32 \times 32$ to $224 \times 224$ and adopt the convolutional layers of a pretrained VGG model to extract 25088-dimensional features.
To classify the extracted features, we select a two-layer MLP model with 4096 hidden units each.
The baseline methods train the neural networks on the real field and select the rectified linear unit (ReLU) function as the activation function.
In each communication round, clients perform one SGD step for the local model update.


\textbf{DReS-FL.} Our method adopts the same size PINNs to replace MLPs in the federated training, and the degree of gradient is $\operatorname{deg}(\bm{g}) =8$. 
Particularly, the extracted features from the last convolutional layer of VGG19 are secretly shared with other clients.
We set the parameters $K=1$ and $T=1$ in the Lagrange coding, and the minimum number of clients needed to decode the global gradient is 9.

\textbf{Baselines.} In the experiments, data-centric approaches \cite{raw_data_share_1,data_synthesis_1,gan_1,MPC3,CodedPrivateML,NIPS_scalable} are not compared since some of them \cite{raw_data_share_1,data_synthesis_1,gan_1} lack strong privacy guarantees while others \cite{MPC3,CodedPrivateML,NIPS_scalable} cannot support federated neural network training with multiple clients.
We select algorithm-based methods as baselines, including FedAvg \cite{fl}, FedAvg with importance sampling (FedAvg-IS) \cite{ren2020scheduling_FedAvg_IS,kairouz2021advances_sampling}, and SCAFFOLD \cite{scaffold}, since these methods can be easily combined with secure aggregation methods\footnote{Note that the secure aggregation mechanism has not been applied in the experiments, since the quantization step in secure aggregation may degrade the performance of the baselines.}.
Particularly, we assume that the FedAvg-IS method knows the dropout distribution, and the local computation results are weighted by the participation probability (i.e., 1 - dropout probability) to mitigate bias in aggregation.
Besides, we also select the centralized training scheme as a performance upper bound, where the server can access all the clients' datasets for model training.

\begin{figure}[t]
    \centering
        \subfigure[MNIST]{
            \centering
            \includegraphics[width = 0.31\linewidth]{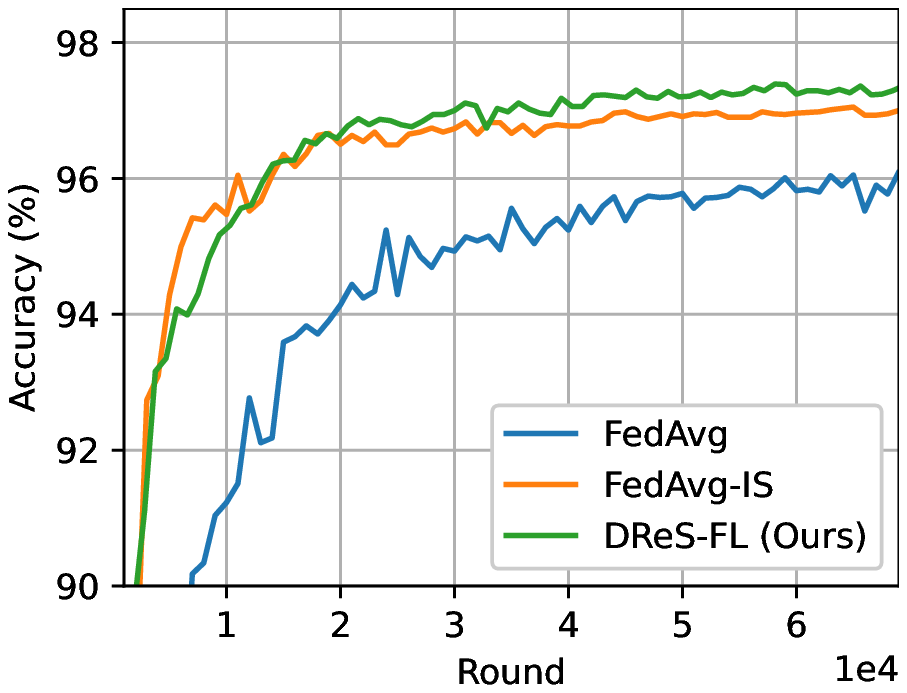}
            \label{fig:}
        }
        \hfill
        \subfigure[Fashion-MNIST]{
            \centering
            \includegraphics[width = 0.31\linewidth]{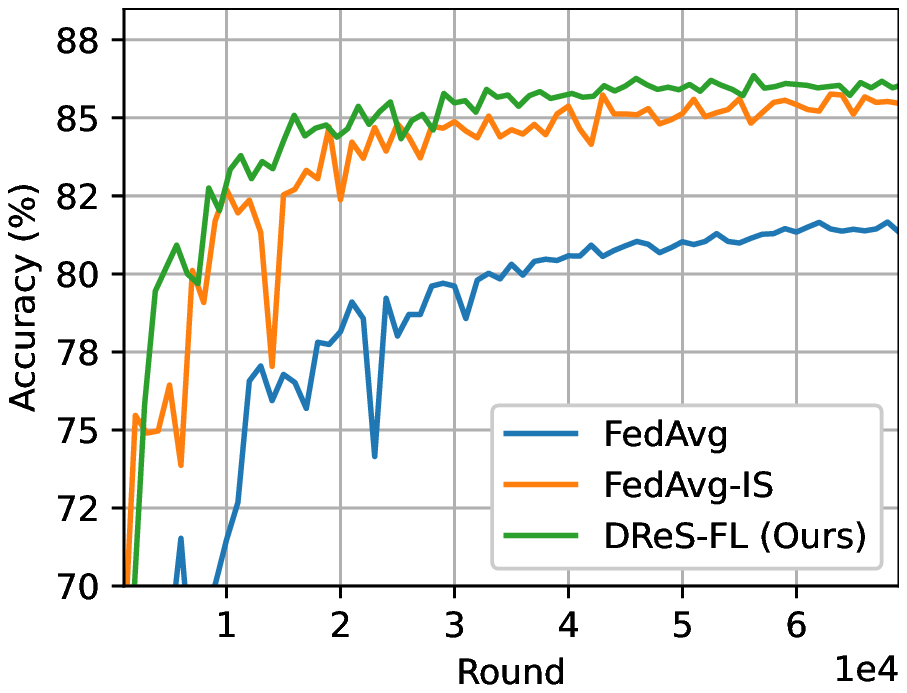}
            \label{fig:}
        }
        \subfigure[EMNIST]{
            \centering
            \includegraphics[width = 0.31\linewidth]{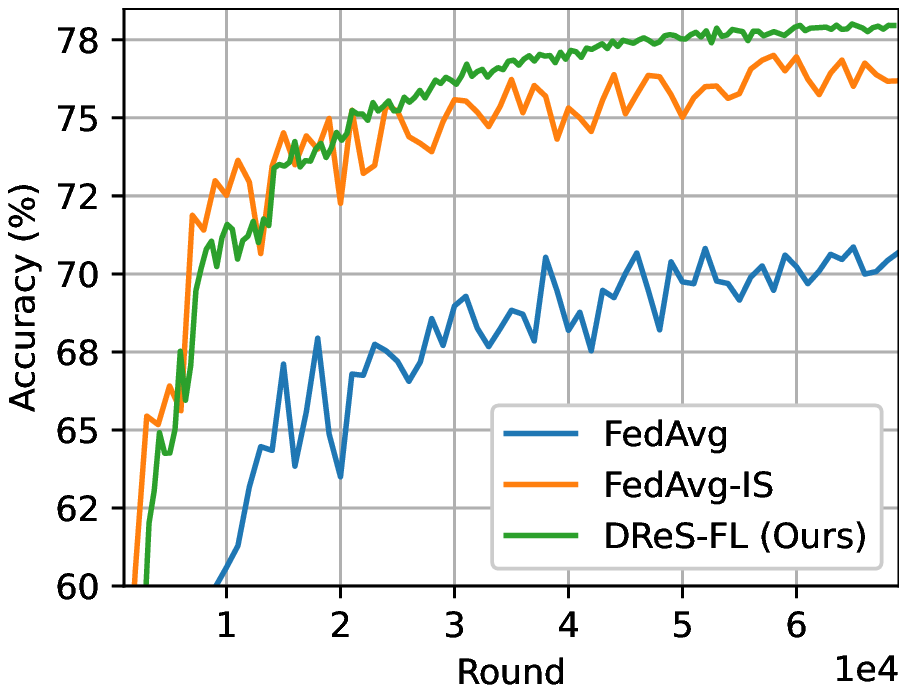}
            \label{fig:}
        }
        \vfill\vspace{-0.3cm}
        \subfigure[CIFAR-10]{
            \centering
            \includegraphics[width = 0.31\linewidth]{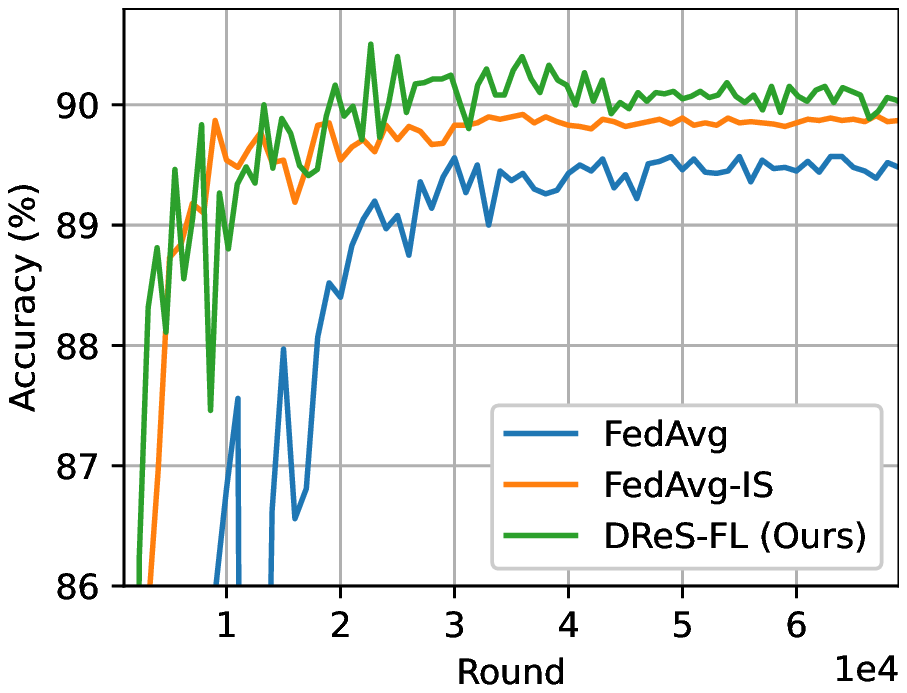}
            \label{fig:}
        }
        \hfill
        \subfigure[CIFAR-100]{
            \centering
            \includegraphics[width = 0.31\linewidth]{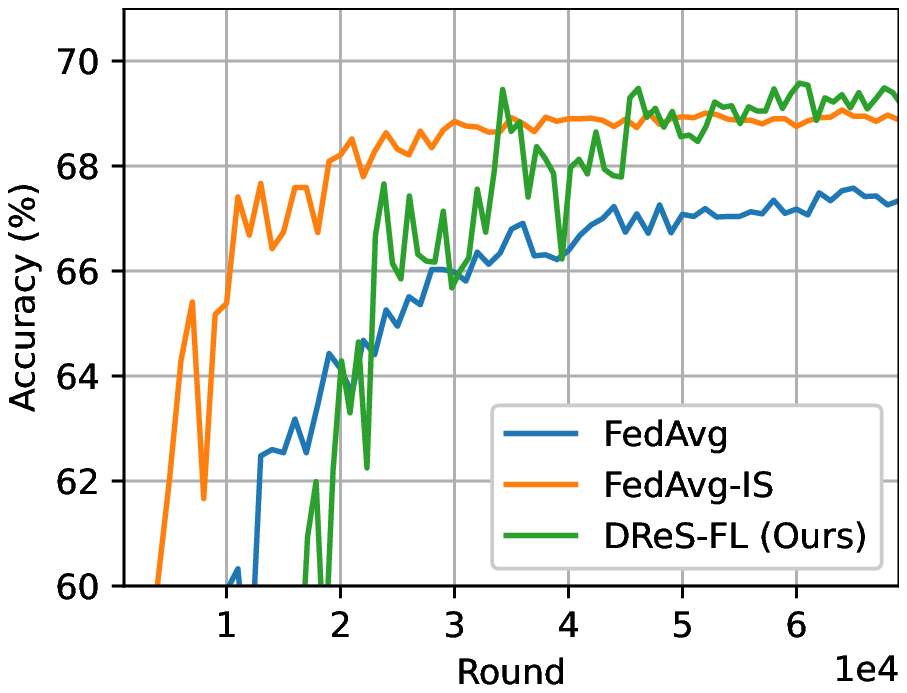}
            \label{fig:}
        }
        \subfigure[SVHN]{
            \centering
            \includegraphics[width = 0.31\linewidth]{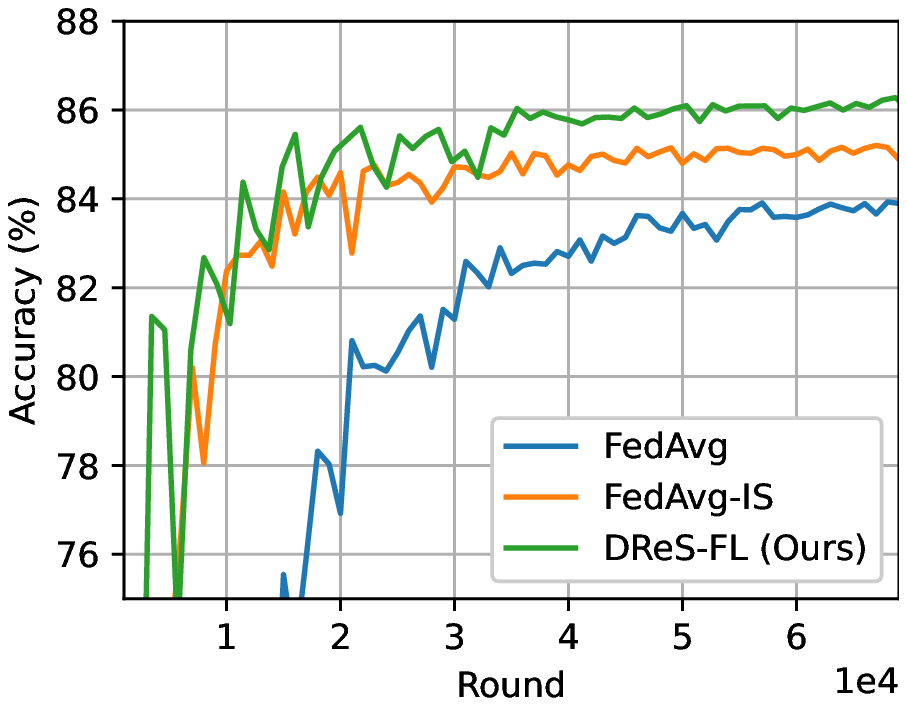}
            \label{fig:}
        }
        \caption{Test accuracies on different datasets.}
        \label{fig:manuscript_convergence}
    \end{figure}

\begin{wrapfigure}{R}{0.31\textwidth}
\vspace{-0.3cm}
\centering
\includegraphics[width=\linewidth]{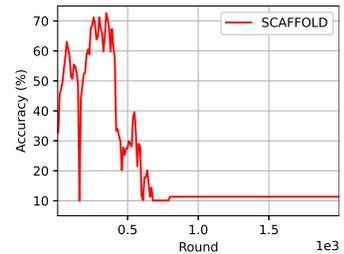}
\caption{Test accuracy of SCAFFOLD on MNIST.}
\label{fig:scanffold_mnist}
\end{wrapfigure} 

\subsection{Performance Evaluation}

The experimental results are shown in Table \ref{table:manuscript_accuracy} and Fig. \ref{fig:manuscript_convergence}.
The FedAvg method achieves worse performance than the centralized training scheme.
This is attributed to the non-IID data and client dropouts.
The FedAvg-IS method improves the test accuracy compared with FedAvg, but there is still a noticeable performance gap with the centralized training scheme.
It shows that using the knowledge of dropout distribution can partially compensate for the biases in the aggregated models, but the local data distributions are still heterogeneous and degrade the performance.
Besides, SCAFFOLD has a low accuracy on all the settings. 
As the frequency of updating local control variates is low, the estimation of the update direction is highly inaccurate such that the model does not converge as shown in Fig. \ref{fig:scanffold_mnist}.
These results are consistent with the findings in \cite{noniidsilo}.
Our DReS-FL method is superior to all the baseline methods as the server can obtain global gradients after polynomial interpolation.
In addition, DReS-FL achieves comparable performance to the centralized training scheme on some datasets, which demonstrates the effectiveness of our proposed framework in solving the non-IID and dropout problems.

\section{Conclusions}
\label{sec:con_dis}

This paper proposed a Dropout-Resilient Secure Federated Learning (DReS-FL) framework via Lagrange coded computing (LCC) to simultaneously solve the data heterogeneity and dropout problems of FL, while providing privacy guarantees for the local datasets.
The polynomial integer neural networks (PINNs) have been constructed to ensure that the server can correctly decode the global gradient without privacy leakage.
Extensive experimental results validated the effectiveness of the proposed method.
Potential limitations of our method include that the degree of the gradient in a PINN increases exponentially with the number of layers, which hinders training a deep model for complex tasks.
Besides, performing multiple local SGD steps largely increases the finite field size as the range of results grows exponentially with the number of multiplications, and thus it will lead to substantial communication overhead in model transmission.
Despite some limitations, we believe DReS-FL is a promising framework for many practical FL application scenarios given its effectiveness in resolving both the non-IID and client dropout problems, while with strong privacy guarantees.

\bibliographystyle{nips}
\bibliography{ref.bib} 




\clearpage

\appendix 

{\Large \textbf{Appendix}}

\section{Proof of Theorem 2} 
\label{appendix:proof}

For simplicity, we denote $\bm{g}^{(t)} = \frac{1}{bK} \sum_{k=1}^{K} {\bm{g}}(\overline{\mathbf{X}}_{\indext}^{(k)},\overline{\mathbf{Y}}_{\indext}^{(k)};\mathbf{w}^{(t)})$ in this section.
The server updates the global model by $\mathbf{w}^{(t+1)} = \mathbf{w}^{(t)} - Q(\eta \bm{g}^{(t)})$ in each round $t$ after receiving $\operatorname{deg}(\bm{g})(K+T-1) +1$ uploads from clients.
We first provide an important lemma to show that the model update $Q(\eta \bm{g}^{(t)})$ on the server is an unbiased estimate of $\eta \bm{g}_{e}(\mathbf{w}^{(t)})$.


\begin{lemma}\label{lem-1}
(Unbiased and variance-bounded model update)
In the $t$-th round, the model update $Q(\eta \bm{g}^{(t)})$ has the following properties:
\begin{align}
    \mathbb{E}\left[ Q(\eta \bm{g}^{(t)}) \right] = & \eta \bm{g}_{e}(\mathbf{w}), \\
    \mathbb{E}\left[ \| Q(\eta\bm{g}^{(t)}) - \eta \bm{g}_{e}(\mathbf{w}) \|^2 \right] \leq & (\gamma^2 + 1) \eta^2 \frac{\sigma^2}{bK} + \gamma^2 \eta^2 \left\|\bm{g}_{e}(\mathbf{w}^{(t)}) \right\|^2.
\end{align}
\end{lemma}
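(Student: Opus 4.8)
The plan is to establish both statements by conditioning on the aggregate stochastic gradient $\bm{g}^{(t)}$ and invoking the tower property of expectation, so that the randomness of the quantizer $Q(\cdot)$ and the randomness of the mini-batch sampling can be treated separately. First I would fix the per-round index set $\indext$ and the parameters $\mathbf{w}^{(t)}$, and write $\bm{g}^{(t)} = \frac{1}{K}\sum_{k=1}^{K}\frac{1}{b}\bm{g}(\overline{\mathbf{X}}_{\indext}^{(k)},\overline{\mathbf{Y}}_{\indext}^{(k)};\mathbf{w}^{(t)})$ as an average of $K$ normalized shard gradients, each of which is unbiased for $\bm{g}_e(\mathbf{w}^{(t)})$ and has second-moment deviation bounded by $\sigma^2$ under Assumption \ref{ass-2}.

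For the unbiasedness claim I would use $\mathbb{E}[Q(\eta\bm{g}^{(t)})] = \mathbb{E}\big[\mathbb{E}[Q(\eta\bm{g}^{(t)})\mid\bm{g}^{(t)}]\big]$. Applying Assumption \ref{ass-3} coordinate-wise to the conditionally fixed vector $\eta\bm{g}^{(t)}$ gives $\mathbb{E}[Q(\eta\bm{g}^{(t)})\mid\bm{g}^{(t)}] = \eta\bm{g}^{(t)}$, and taking the outer expectation together with the unbiasedness part of Assumption \ref{ass-2} yields $\mathbb{E}[Q(\eta\bm{g}^{(t)})] = \eta\,\mathbb{E}[\bm{g}^{(t)}] = \eta\bm{g}_e(\mathbf{w})$. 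This is the easy half.

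For the second-moment bound the plan is an orthogonal decomposition
\[
Q(\eta\bm{g}^{(t)}) - \eta\bm{g}_e(\mathbf{w}) = \underbrace{\big(Q(\eta\bm{g}^{(t)}) - \eta\bm{g}^{(t)}\big)}_{\text{quantization error}} + \underbrace{\big(\eta\bm{g}^{(t)} - \eta\bm{g}_e(\mathbf{w})\big)}_{\text{sampling error}}.
\]
I would square and take expectations; the cross term vanishes because, conditioned on $\bm{g}^{(t)}$, the quantization error has zero mean by Assumption \ref{ass-3} while the sampling error is deterministic. The quantization term is bounded by Assumption \ref{ass-3} as $\gamma^2\eta^2\,\mathbb{E}[\|\bm{g}^{(t)}\|^2]$, and I then expand $\mathbb{E}[\|\bm{g}^{(t)}\|^2] = \|\bm{g}_e(\mathbf{w})\|^2 + \mathbb{E}[\|\bm{g}^{(t)} - \bm{g}_e(\mathbf{w})\|^2]$. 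The sampling term $\eta^2\,\mathbb{E}[\|\bm{g}^{(t)} - \bm{g}_e(\mathbf{w})\|^2]$ is controlled by writing $\bm{g}^{(t)} - \bm{g}_e$ as the average of $K$ zero-mean shard deviations; using Assumption \ref{ass-2} and the independence of the shard gradients, the cross terms again drop and the variance contracts to the $\sigma^2/(bK)$ scale. Substituting the expansion of $\mathbb{E}[\|\bm{g}^{(t)}\|^2]$ and collecting the $\gamma^2$ and unit coefficients reproduces $(\gamma^2+1)\eta^2\sigma^2/(bK) + \gamma^2\eta^2\|\bm{g}_e(\mathbf{w}^{(t)})\|^2$.

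The main obstacle I anticipate is the variance-contraction step for the sampling error: Assumption \ref{ass-2} only bounds the second moment of each normalized shard gradient by $\sigma^2$, so obtaining the $1/(bK)$ reduction requires a precise independence model of the $bK$ samples constituting the global batch and a consistent normalization convention between the per-sample and per-mini-batch variances. A secondary care point is that Assumption \ref{ass-3} is stated for scalars, so it must be applied coordinate-wise to $\eta\bm{g}^{(t)}$ and summed, which is what turns the scalar bound $\gamma^2 z^2$ into $\gamma^2\|\eta\bm{g}^{(t)}\|^2$; the conditional-expectation bookkeeping (treating $\bm{g}^{(t)}$ as fixed when the quantizer fires) must be kept explicit throughout to justify dropping the cross terms.
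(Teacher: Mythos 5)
Your proposal is correct and follows essentially the same route as the paper's proof: unbiasedness via the tower property, the orthogonal decomposition of $Q(\eta \bm{g}^{(t)}) - \eta \bm{g}_{e}(\mathbf{w}^{(t)})$ into quantization and sampling errors with a vanishing cross term, the expansion $\mathbb{E}\left[ \| \bm{g}^{(t)} \|^2 \right] = \|\bm{g}_{e}(\mathbf{w}^{(t)})\|^2 + \mathbb{E}\left[ \| \bm{g}^{(t)} - \bm{g}_{e}(\mathbf{w}^{(t)}) \|^2 \right]$, and variance contraction through independence of the shard gradients. The normalization subtlety you flag---that Assumption~\ref{ass-2} as literally stated only yields a $\sigma^2/K$ contraction unless $\sigma^2$ is interpreted as a per-sample variance---is a genuine gap, but it is present in the paper's own proof as well, which reaches the $\sigma^2/(bK)$ scale by invoking only ``Assumption~\ref{ass-2} and the independence of mini-batch sampling noises among clients,'' so your treatment is if anything more explicit about the bookkeeping than the original.
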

\begin{proof}
According to Assumption \ref{ass-2}-\ref{ass-3}, we directly obtain that $\mathbb{E}\left[ Q( \eta \bm{g}^{(t)}) \right] = \mathbb{E}\left[ \eta \bm{g}^{(t)} \right] = \bm{g}_{e}(\mathbf{w})$. 
Since the batch sampling and rounding operation cause independent errors, the variance is upper bounded as follows:
\begin{equation*}
\begin{split}
    &\quad \mathbb{E}\left[ \left\| Q(\eta \bm{g}^{(t)}) - \eta \bm{g}_{e}(\mathbf{w}^{(t)}) \right\|^2 \right] \\
    &= \mathbb{E}\left[ \left\| Q(\eta \bm{g}^{(t)}) - \eta \bm{g}^{(t)}  \right\|^2 \right] + \mathbb{E}\left[ \left\| \eta \bm{g}^{(t)} - \eta \bm{g}_{e}(\mathbf{w}^{(t)}) \right\|^2 \right] \\
    &\overset{(a)}{\leq} \gamma^2 \mathbb{E}\left[ \left\| \eta \bm{g}^{(t)} \right\|^2 \right] + \mathbb{E} \left[ \left\| \eta \bm{g}^{(t)} - \eta \bm{g}_{e}(\mathbf{w}^{(t)}) \right\|^2\right] \\
    &= \gamma^2 \eta^2 \left[ \mathbb{E}\left[ \left\| \bm{g}^{(t)} - \bm{g}_{e}(\mathbf{w}^{(t)}) \right\|^2 \right]+ \left\|\bm{g}_{e}(\mathbf{w}^{(t)}) \right\|^2 \right]+ \eta^2 \mathbb{E}\left[ \left\| \bm{g}^{(t)} - \bm{g}_{e}(\mathbf{w}^{(t)}) \right\|^2 \right] \\
    &\overset{(b)}{\leq} (\gamma^2 + 1) \eta^2 \frac{\sigma^2}{bK} + \gamma^2 \eta^2 \left\|\bm{g}_{e}(\mathbf{w}^{(t)}) \right\|^2,
\end{split}
\end{equation*}
where (a) follows Assumption \ref{ass-3}.
(b) is due to Assumption \ref{ass-2} and the independence of mini-batch sampling noises among clients.
\end{proof}
With Lemma \ref{lem-1}, we prove Theorem \ref{thm-convergence} as follows:
\begin{proof}
The model update in the $t$-th iteration can be expressed as $\mathbf{w}^{(t+1)} = \mathbf{w}^{(t)} - Q(\eta\bm{g}^{(t)})$.
According to the Taylor's expansion, we have:
\begin{equation*}
\begin{split}
    &\quad \mathbb{E}\left[ \ell(\mathbf{w}^{(t+1)})\right] - \mathbb{E}\left[\ell(\mathbf{w}^{(t)})\right] \\
    & \leq - \mathbb{E}\left\langle \bm{g}_{e}(\mathbf{w}^{(t)}), Q(\eta \bm{g}^{(t)}) \right\rangle + \frac{L}{2} \mathbb{E} \left[ \left\| Q(\eta \bm{g}^{(t)}) \right\|^2 \right] \\
    &\overset{(c)}{=} - \mathbb{E}\left\langle \bm{g}_{e}(\mathbf{w}^{(t)}), \eta \bm{g}_{e}(\mathbf{w}^{(t)}) \right\rangle + \frac{L}{2} \mathbb{E} \left[ \left\| Q(\eta \bm{g}^{(t)}) \right\|^2 \right] \\
    &\overset{(d)}{=} -\eta \mathbb{E}\left[ \| \bm{g}_{e}(\mathbf{w}^{(t)}) \|^2 \right] + \frac{L}{2} \mathbb{E} \left[ \left\| Q(\eta \bm{g}^{(t)}) - \eta \bm{g}_{e}(\mathbf{w}^{(t)}) \right\|^2 \right] + \frac{L}{2} \mathbb{E} \left[ \left\| \eta \bm{g}_{e}(\mathbf{w}^{(t)}) \right\|^2 \right] \\
    &\overset{(e)}{\leq} - \left(\eta - \frac{\eta^2 L}{2} \right) \mathbb{E}\left[ \| \bm{g}_{e}(\mathbf{w}^{(t)}) \|^2 \right] + \frac{L}{2} \left( (\gamma^2 + 1) \eta^2 \frac{\sigma^2}{bK} + \gamma^2 \eta^2 \mathbb{E}\left[ \| \bm{g}_{e}(\mathbf{w}^{(t)}) \|^2 \right] \right) \\
    &= - \left( \eta - \frac{\eta^2 L}{2} - \frac{\eta^2 \gamma^2 L}{2} \right) \mathbb{E}\left[ \| \bm{g}_{e}(\mathbf{w}^{(t)}) \|^2 \right] + \frac{ \eta^2 L\sigma^2}{2bK} (\gamma^2 + 1),
\end{split}
\end{equation*}
where (c) follows Lemma \ref{lem-1}, (d) holds according to the fact that $\mathbb{E}\left\langle \nabla Q(\eta \bm{g}^{(t)}) - \eta \bm{g}_{e}(\mathbf{w}^{(t)}), \bm{g}_{e}(\mathbf{w}^{(t)}) \right\rangle = 0$, and (e) is due to Assumption \ref{ass-2}-\ref{ass-3}.
If it holds that $\eta - \frac{\eta^2 L}{2} - \frac{\eta^2 \gamma^2 L}{2} > 0$, we summarize the above inequality over $t=1,2, \dots, \tau^\prime$ to conclude the proof.
\end{proof}

\begin{table}[t]
\centering
\caption{Details of the datasets}
\resizebox{\textwidth}{!}{
\begin{tabular}{c|c|c|c|c|c|c}
\hline
 & MNIST & Fashon-MNIST & EMNIST & CIFAR-10 & CIFAR-100 & SVHN \\ \hline
No. of classes & 10 & 10 & 47 & 10 & 100 & 10 \\ \hline
\begin{tabular}[c]{@{}c@{}} No. of \\ training samples\end{tabular}  &  60,000 & 60,000  &  112,800 & 50,000 & 50,000 & 73,257   \\ \hline
\begin{tabular}[c]{@{}c@{}} No. of \\ test samples\end{tabular}  &  10,000 &  10,000 & 18,800 & 10,000 & 10,000 & 26,032   \\ \hline
Image size & $28 \times 28$ & $28 \times 28$ & $28 \times 28$ & $32 \times 32$ & $32 \times 32$ & $32 \times 32$ \\ \hline
License & \begin{tabular}[c]{@{}c@{}}Creative Commons\\ Attribution-Share \\ Alike 3.0 License\end{tabular}    &  \begin{tabular}[c]{@{}c@{}}MIT \\ License\end{tabular} &  \begin{tabular}[c]{@{}c@{}}Apache \\ License 2.0\end{tabular}  & \begin{tabular}[c]{@{}c@{}}MIT \\ License\end{tabular} & \begin{tabular}[c]{@{}c@{}}MIT \\ License\end{tabular} & \begin{tabular}[c]{@{}c@{}}CC0:Public \\ Domain License\end{tabular}  \\ \hline
\end{tabular}
}
\label{table:appendix_datasets}
\end{table}

\begin{table}[t]
\centering
\caption{Hyperparameters for our DReS-FL method}
\resizebox{\textwidth}{!}{
\begin{tabular}{c|c|c|c|c|c|c}
\hline
Parameters      & MNIST & Fashion-MNIST & EMNIST & CIFAR-10 & CIFAR-100 & SVHN \\ \hline
\begin{tabular}[c]{@{}c@{}}Maximum L2-norm\\ for gradient clipping\end{tabular}   &   $2 \times 10^{4}$ & $2 \times 10^{4}$      &  $5 \times 10^{6}$     & $2 \times 10^{4}$        &  $1 \times 10^{9}$         & $2 \times 10^{4}$     \\ \hline
Prime number $p$  &  $2^{200} -75$ &  $2^{200} -75$    &  $2^{440} - 33$     &  $2^{440} - 33$ & $2^{440} - 33$ & $2^{440} - 33$ \\ \hline
\begin{tabular}[c]{@{}c@{}}Parameter $l$ in\\ data transformation\end{tabular}   &  $4$ &  $4$    &  $4$     &  $2$ & $2$ & $2$\\ \hline
\end{tabular}
}
\label{table:appendix_hyperparameters_our}
\end{table}

\section{Additional Experimental Details}
\label{appendix:exp}

All experiments are performed by Pytorch on an Intel Xeon Gold 6246R CPU @ 3.40 GHz and a Geforce RTX 3090.
Some details of the datasets are summarized in Table \ref{table:appendix_datasets}.
We adopt mini-batch SGD with a batch size of 64 to optimize the models in federated training. 
The communication round is set to be $7 \times 10^{4}$, and the clients perform one local SGD step in each round.
The learning rate is initialized as 0.1, and it will decay with a factor of 0.65 after every 1500 rounds.
Other parameters in our DRes-FL framework are summarized in Table \ref{table:appendix_hyperparameters_our}.

\begin{table}[t]
\centering
\caption{Computational complexity comparison}
\resizebox{\textwidth}{!}{
\begin{tabular}{c|c|ccc}
\hline
\multirow[c]{2}[3]{*}{} & Preparation & \multicolumn{3}{c}{Iterative training ($\tau$ rounds)} \\ \cline{2-5} 
 & \begin{tabular}[c]{@{}c@{}}Lagrangian \\ coding\end{tabular}    & \multicolumn{1}{c|}{\begin{tabular}[c]{@{}c@{}}Generating coded \\ random masks\end{tabular}} & \multicolumn{1}{c|}{\begin{tabular}[c]{@{}c@{}}Local model \\ update\end{tabular}}  &  \begin{tabular}[c]{@{}c@{}} Global model \\ aggregation\end{tabular} \\ \hline
FedAvg  & --- & \multicolumn{1}{c|}{---}    & \multicolumn{1}{c|}{$\mathcal{O}\left(\tau d_{w} b_{g} / N\right)$}   &  $\mathcal{O}(\tau Nd_{w})$ \\ \hline
\begin{tabular}[c]{@{}c@{}}FedAvg with \\ LightSecAgg\end{tabular} & ---   & \multicolumn{1}{c|}{$\mathcal{O}\left(\frac{\tau d_{w} N^{2} \log N}{R-T}\right)$}   & \multicolumn{1}{c|}{$\mathcal{O}\left(\tau d_{w} b_{g} / N\right)$}  & $\mathcal{O}\left(\frac{\tau d_{w} R \log R}{R-T} \right)$  \\ \hline
DReS-FL  & \begin{tabular}[c]{@{}c@{}}$\mathcal{O}(N^{2} \log ^{2}(K+T)$ \\ $\log \log (K+T))$\end{tabular}    & \multicolumn{1}{c|}{---}   & \multicolumn{1}{c|}{$\mathcal{O}\left(\tau d_{w} b_{g} / K\right)$}  &  \begin{tabular}[c]{@{}c@{}}$\mathcal{O}(\tau d_{w}R \log ^{2}R$ \\ $\log \log R )$\end{tabular}    \\ \hline
\end{tabular}
}
\label{table:appendix_comp_complexity}
\end{table}

\begin{table}[t]
\centering
\caption{Communication complexity comparison}
\resizebox{\textwidth}{!}{
\begin{tabular}{c|c|cccc}
\hline
\multirow{2}{*}{}  & Preparation  & \multicolumn{4}{c}{Iterative training ($\tau$ rounds)}                                                          \\ \cline{2-6} 
& Data sharing 
& \multicolumn{1}{c|}{\begin{tabular}[c]{@{}c@{}} Coded masks \\ sharing among \\ clients \end{tabular}} 
& \multicolumn{1}{c|}{\begin{tabular}[c]{@{}c@{}} Local model \\ uploading\end{tabular}} 
& \multicolumn{1}{c|}{\begin{tabular}[c]{@{}c@{}} Coded masks \\ uploading \end{tabular}} 
&   \begin{tabular}[c]{@{}c@{}} Global model \\ downloading\end{tabular} \\ \hline
FedAvg
& ---          
& \multicolumn{1}{c|}{---}             
& \multicolumn{1}{c|}{$\mathcal{O}(\tau  d_{w})$}           
& \multicolumn{1}{c|}{---}
& $\mathcal{O}(\tau N d_{w})$
\\ \hline
\begin{tabular}[c]{@{}c@{}}FedAvg with \\ LightSecAgg\end{tabular} 
& ---          
& \multicolumn{1}{c|}{$\mathcal{O}\left( \frac{\tau N^{2}d_{w}}{R-T} \right)$}
& \multicolumn{1}{c|}{$\mathcal{O}(\tau  d_{w})$}           
& \multicolumn{1}{c|}{$\mathcal{O}\left( \frac{\tau d_{w} R}{R-T}\right) $}
& $\mathcal{O}(\tau N d_{w})$    
\\ \hline
DReS-FL
& $\mathcal{O}(N^{2}/K)$          
& \multicolumn{1}{c|}{--- }             
& \multicolumn{1}{c|}{$\mathcal{O}(\tau  d_{w})$}           
& \multicolumn{1}{c|}{---}
& $\mathcal{O}(\tau N d_{w})$ 
\\ \hline
\end{tabular}
}
\label{table:appendix_comm_complexity}
\end{table}

\section{Complexity Analysis and Comparison}
\label{appendix_complexity}

In this part, we analyze the communication and computational complexities of the proposed DReS-FL framework with respect to the parameters $(N,T,K,\tau,d_{w},b_{g})$.
Parameter $N$ is the number of clients, and $T$ denotes the privacy threshold in Lagrange coding \cite{LCC}.
Parameter $K$ denotes the number of shards in the local datasets.
A large value of $K$ reduces the communication and computation overheads in the proposed DReS-FL framework.
In the federated training, the parameter $\tau$ corresponds to the number of communication rounds.
Parameters $d_{w}$ and $b_{g}$ denote the model size and the global batch size, respectively.
Before training starts, each client's computation cost for Lagrange coding and communication complexity for data sharing are $\mathcal{O}(N \log ^{2}(K+T) \log \log (K+T))$ and $\mathcal{O}(N / K)$, respectively.
In each round of federated training, the local computation complexity is $\mathcal{O}(d_{w}b_{g}/K)$, and the model uploading cost is $\mathcal{O}(d_{w})$.
Besides, the communication overhead of the server for model distributing is $\mathcal{O}(Nd_{w})$, and the model decoding complexity by polynomial interpolation is $\mathcal{O}(R\log^{2}R \log \log R  d_{w})$, where $R$ denotes the minimum  uploads needed for gradient decoding.

Different from our method, secure aggregation approaches \cite{secagg,SecAgg2017,fastsecagg,jahani2022swiftagg+,SecAgg+,lightsecagg} generate random masks to protect the local model parameters.
In each round, clients first share coded masks with each other, which allows for aggregating the masked models at the server.
As some clients may drop out of the training process unexpectedly, the surviving clients upload the shared information belonging to the dropped clients to reconstruct the aggregated model.
The main drawback of such approaches is that the clients need to generate new masks in each round, and their computational and communication complexities increase linearly with the number of training rounds.
In comparison, the data sharing phase of our method only introduces extra costs for one time, which is independent of the training rounds.
In the scenario that the number of training rounds is very large, the proposed DReS-FL method achieves lower computational and communication costs than the secure aggregation protocols.
The detailed comparisons among FedAvg, FedAvg with LighSecAgg \cite{lightsecagg}, and our DReS-FL method are summarized in Table \ref{table:appendix_comp_complexity} and \ref{table:appendix_comm_complexity}.

\section{Model Extension}
\label{appendix:multi-round_SGD}

Our DReS-FL framework can be extended to more general cases in which clients can run $s$ ($s \geq 1$) local SGD steps each round.
Denote the computation results after $s$ local SGD steps in round $t$ as $\Delta \widetilde{\mathbf{w}}_{j}(s;\mathbf{w}^{(t)})$ for $j \in [N]$.
Specifically, $\Delta \widetilde{\mathbf{w}}_{j}(s=1;\mathbf{w}^{(t)}) \triangleq \widetilde{\bm{g}}(\widetilde{\mathbf{X}}_{j}^{(\mathcal{I}_{t})},\widetilde{\mathbf{Y}}_{j}^{(\mathcal{I}_{t})};\mathbf{w}^{(t)})$ and $\Delta \widetilde{\mathbf{w}}_{j}(s=2;\mathbf{w}^{(t)}) \triangleq \widetilde{\bm{g}}(\widetilde{\mathbf{X}}_{j}^{(\mathcal{I}_{t+1})},\widetilde{\mathbf{Y}}_{j}^{(\mathcal{I}_{t+1})};\mathbf{w}^{(t)} - \frac{\eta}{bK} \widetilde{\bm{g}}(\widetilde{\mathbf{X}}_{j}^{(\mathcal{I}_{t})},\widetilde{\mathbf{Y}}_{j}^{(\mathcal{I}_{t})};\mathbf{w}^{(t)}))$.
By carefully selecting the learning rate $\eta$ such that $\frac{\eta}{bK} \in \mathbb{F}_{p}$, the function $\Delta \widetilde{\mathbf{w}}_{j}(s;\mathbf{w}^{(t)})$ is still a polynomial in the finite field $\mathbb{F}_{p}$.
Therefore, the central server can recover the desired model update by polynomial interpolation at the cost of low dropout-resiliency caused by the high degree of $\Delta \widetilde{\mathbf{w}}_{j}(s;\mathbf{w}^{(t)})$.


\section{Degree of Gradient in PINN}
\label{appendix:degree_of_PINN}

Given a data sample as $(\bm{x},\bm{y})$, the feedforward process of a PINN with $L$ quadratic activation layers $\bm{h}$ is as follows:
\begin{equation*}
    \bm{z}_{0}^{\prime} \rightarrow \bm{z}_{1} \rightarrow \bm{z}_{1}^{\prime} \rightarrow \bm{z}_{2} \rightarrow \bm{z}_{2}^{\prime} \rightarrow \cdots \rightarrow \bm{z}_{L}\rightarrow \bm{z}_{L}^{\prime} \rightarrow \bm{z}_{L+1},
\end{equation*}
where $\bm{z}_{0}^{\prime} \triangleq \bm{x}$, $\bm{z}_{l}^{\prime} = \bm{h}(\bm{z}_{l})$, and $\bm{z}_{l} = \mathbf{W}_{l} \bm{z}_{l-1}^{\prime} + \bm{b}_{l}$ for $l \in 1\!:\!L+1$.
$\bm{z}_{L+1}$ is the output of PINN, and the loss function is the squared error between $\bm{y}$ and $\bm{z}_{L+1}$, i.e., $ \ell = \|\bm{y} - \bm{z}_{L+1} \|_{2}^{2}$.
The notations $\mathbf{W}_{l}$ and $\bm{b}_{l}$ correspond to the weight matrix and bias vector in PINN.
With the above preparation, we derive the gradients as follows:
\begin{equation*}
      \frac{\partial \ell}{\partial \bm{z}_{L+1}} = 2(\bm{z}_{L+1}-\bm{y})^{T}, \ \frac{\partial \bm{z}_{l}^{\prime}}{\partial \bm{z}_{l}} = 2 \mathrm{diag}(\bm{z}_{l}), \ \frac{\partial \bm{z}_{l}}{\partial \mathbf{W}_{l}[j]} = \mathbf{I}\bm{z}_{l-1}^{\prime}[j], \ \frac{\partial \bm{z}_{l}}{\partial \bm{z}_{l-1}} = \mathbf{W}_{l},  
\end{equation*}
where $\mathbf{W}_{l}[j]$ is the $j$-th column in $\mathbf{W}_{l}$, and $\bm{z}_{l-1}^{\prime}[j]$ is the $j$-th element in the vector $\bm{z}_{l-1}^{\prime}$.
According to the chain rule of gradient, the gradient of the loss function with respect to the weight $\mathbf{W}_{l}[j]$ is 
\begin{equation*}
\begin{aligned}
    \frac{\partial \ell}{\partial \mathbf{W}_{l}[j]} = & \frac{\partial \ell}{\partial \bm{z}_{L+1}} \frac{\partial \bm{z}_{L+1}}{\partial \bm{z}_{L}^{\prime}} \frac{\partial \bm{z}_{L}^{\prime}}{\partial \bm{z}_{L}} 
    \cdots 
    \frac{\partial \bm{z}_{l+1}}{\partial \bm{z}_{l}^{\prime}} 
    \frac{\partial \bm{z}_{l}^{\prime}}{\partial \bm{z}_{l}}
    \frac{\partial \bm{z}_{l}}{\partial \mathbf{W}_{l}[j]}, \\
    = & 2(\bm{z}_{L+1}-\bm{y})^{T} \mathbf{W}_{L+1} 2 \mathrm{diag}(\bm{z}_{L})  \cdots \mathbf{W}_{l+1} 2 \mathrm{diag}(\bm{z}_{l})  \bm{z}_{l-1}^{\prime}[j].
\end{aligned}
\end{equation*}
Note that the mappings from input $\bm{z}_{0}^{\prime}$ to $\bm{z}_{l}^{\prime}$ and $\bm{z}_{l+1}$ are polynomials with degree $2^{l}$.
Therefore, the degree of the gradient $\frac{\partial \ell}{\partial \mathbf{W}_{l}[j]}$ is
\begin{equation*}
\begin{aligned}
    \deg\left(\frac{\partial \ell}{\partial \mathbf{W}_{l}[j]}\right)
& = 2^{L} + 2^{L-1} + \cdots + 2^{l-1} + 2^{l-1}, \\
& = 2^{L+1}.
\end{aligned}
\end{equation*}

\end{document}